\documentclass[letterpaper, 10 pt, conference]{ieeeconf}  

\IEEEoverridecommandlockouts                              

\usepackage[T1]{fontenc}
\usepackage{graphicx}
\usepackage{amsmath}
\usepackage{amsfonts} 
\usepackage{amssymb} 
\usepackage{subcaption}
\usepackage{mathtools} 
\usepackage{graphicx} 
\usepackage{xcolor} 
\usepackage{acronym}
\usepackage{cite}
\let\labelindent\relax 
\usepackage[inline]{enumitem}

\usepackage{balance}
\usepackage[none]{hyphenat}
\usepackage{acronym}

\usepackage{tikz}
\usetikzlibrary{trees}
\def\Nat{{\mathbb{N}}}

\def\X{{\mathcal{X}}}
\def\L{{\mathcal{L}}}
\def\G{{\mathcal{G}}}
\def\T{{\mathcal{T}}}
\def\R{{\mathcal{R}}}

\def\F{{\mathcal{F}}}

\def\M{{\mathcal{M}}}

\def\lessNat{{<\Nat}}

\def\subder{{\text{der}}}

\def\subtask{{\text{task}}}

\def\hissigma{{\tilde{\sigma}}}

\def\imap{{\kappa}}

\def\hisu{\tilde{u}}

\def\fMP{{\diamond_f}}
\def\fTr{{\bar{f}}}

\def\ginc{{\text{ginc}}}
\def\inc{{\text{inc}}}
\def\adj{{\text{adj}}}
\def\con{{\text{car}}}

\def\move{{\text{move}}}
\def\picktext{{\text{pick}}}

\def\placeontext{{\text{placeon}}}

\def\MP{{\text{MP}}}

\newcommand{\cat}{{}^\frown}
\def\preimg{{^{-1}}}

\renewcommand{\emptyset}{\varnothing}

\def\ie{\textit{i.e.,}}
\def\eg{\textit{e.g.,}}

\def\captionedges{Solid lines are used for inclusion relations, dashed lines for adjacency, and double lines for carrier.}

\acrodef{SG}[SG]{Scene Graph}

\acrodef{TAMP}[TAMP]{Task and Motion Planning}

\acrodef{ITS}[ITS]{Information Transition System}

\acrodef{Ispace}[I-space]{Information Space}
\acrodef{Istate}[I-state]{Information State}

\DeclareMathOperator*{\moveto}{{\tt moveto}}
\DeclareMathOperator*{\pick}{{\tt pick}}

\DeclareMathOperator*{\placeon}{{\tt placeon}}

\newtheorem{claimenv}{\bf{Claim}}
\newtheorem{corollaryenv}{\bf Corollary}
\newtheorem{lemmaenv}{\bf Lemma}
\newtheorem{exampleenv}{\bf Example}
\newtheorem{remarkenv}{\bf Remark}
\newtheorem{propositionenv}{\bf Proposition}
\newtheorem{mydefenv}{\bf Definition}

\newenvironment{lemma}{\vspace{0.05in}\begin{lemmaenv}\em}{\end{lemmaenv}\vspace{0.05in}}

\newenvironment{corollary}{\vspace{0.05in}\begin{corollaryenv}\em}{\end{corollaryenv}\vspace{0.05in}}
\newenvironment{example}{\vspace{0.05in}\begin{exampleenv}}{\end{exampleenv}\vspace{0.05in}}

\newenvironment{proposition}{\vspace{0.05in}\begin{propositionenv}}{\end{propositionenv}\vspace{0.05in}}
\newenvironment{definition}{\vspace{0.05in}\begin{mydefenv}}{\end{mydefenv}\vspace{0.05in}}

\def\figvspace{}

\title{\LARGE \bf
An Information-Space Perspective to \\
Scene Graph Sufficiency for Robotic Task Planning
}

\author{Başak Sakçak$^{1}$ and Francesco Verdoja$^{2}$
\thanks{F.\ Verdoja was supported by the Research Council of Finland (decision 354909) during this work.}
\thanks{$^{1}$Dept.\ of Advanced Computing Sciences, Maastricht University, the Netherlands ({\tt\small basak.sakcak@maastrichtuniversity.nl})}%
\thanks{$^{2}$School of Electrical Engineering, Aalto University, Finland ({\tt\small francesco.verdoja@aalto.fi})}%
}

\begin{document}

\maketitle
\thispagestyle{empty}
\pagestyle{empty}

\begin{abstract}
    Planning in complex environments requires task specifications grounded in representations that capture objects, relations, and affordances; scene graphs meet this need, but their size in large environments hinders efficient planning. While task-aware pruning and hierarchical abstractions have been explored, a general, task-centric formalization of what constitutes a sufficient scene graph for planning remains open. This paper provides such a formalization by modeling planning over scene graphs within an information-spaces framework through the definition of scene graph transition systems and relevant action semantics for navigation and manipulation. We then introduce derived scene graphs via information mappings that merge and prune nodes and induce quotient transition systems augmented with motion primitives to capture higher-level actions over merged graph nodes. Sufficiency is characterized by two conditions: (i) the information mapping yields a deterministic quotient, and (ii) the task is well-posed over derived traces, ensuring plans found on the derived model are feasible on the maximal system. We illustrate the framework using a task over an example environment, showing both sufficient and insufficient reduced scene graphs.
\end{abstract}

\section{Introduction}
Robots typically achieve desired behavior in the physical world by planning sequences of actions in advance.
Planning, in turn, requires a model that captures the objects in the physical world, their attributes, and the relations and affordances that constrain possible robot actions. Over this model, the desired behavior can be specified and effects of the robot actions can be evaluated.
Planning directly on raw world models (\eg{} dense metric maps or point clouds) becomes quickly intractable, while purely symbolic descriptions lack the contextual detail needed to ensure realizable, context-appropriate plans.

Scene graphs have emerged as a compelling middle ground: by modeling environments as hierarchical graphs of spatial entities and their relations, they provide a semantically rich abstraction that grounds planning in the actual state of the world, and can be constructed online from sensor data to autonomously instantiate planning domains \cite{hughes2022hydra,ray2024task}.
Yet real environments induce large scene graphs, significantly increasing planning complexity~\cite{agia2022taskography}. Existing scene-graph planning approaches show that smaller, task-relevant graphs improve planning~\cite{ray2024task, rana_sayplan_2023, nguyen_event-grounding_2026,agia2022taskography}, but they largely rely on heuristics or Large Language Models (LLMs), offering empirical rather than principled guarantees about preserving plan existence and feasibility. This paper focuses on the complementary formal aspect of determining what information a derived scene graph must preserve so that plans found on it correspond to feasible plans on the maximal scene graph.

In parallel with this scene-graph literature, a line of work on information spaces studies what encoding of information given as a robot's interaction history of action-observation pairs is needed for tasks to remain solvable~\cite{sakcak2024mathematical,mcfassel2023reactivity}. However, a gap exists between this formalism and its application to concrete problems.
Borrowing ideas from \cite{sakcak2024mathematical}, we address the problem of ``simplifying'' a scene graph representation and propose a formal structure over which questions about the sufficiency of a simplified scene graph in terms of determining a plan feasible for the original representation can be understood in a precise way.

The main contribution of this paper is a framework for reasoning about sufficiency of scene graphs for robotic task planning.
First we extend the framework in \cite{sakcak2024mathematical} and introduce conditions under which an internal representation is sufficient for planning purposes, and when motion primitives are used. Then, we connect these conditions to planning over scene graphs.
We define scene graph transition systems, involving navigation and manipulation actions. Then, we formalize derived scene graphs as quotient models induced by many-to-one information mappings, with motion primitives accounting for high-level actions. We then provide conditions under which plans found over derived systems remain feasible for the maximal system.


\section{Related Literature}
Mapping semantic information enables complex planning in many robotics domains, from autonomous driving to mobile manipulation. Beyond augmenting traditional 2D/3D maps, 3D Scene Graphs (3DSGs) have emerged as metric-semantic representations whose nodes denote spatial concepts (rooms, objects) and edges encode relations (inclusion, adjacency) \cite{hughes_foundations_2024, kuhn_relational_2026}. They provide memory-efficient, perception-driven semantic understanding and have been demonstrated in real-time operation on robots, \eg{} Hydra \cite{hughes2022hydra,hughes_foundations_2024}. Extensions broaden semantic content with open-vocabulary features (ConceptGraphs \cite{gu_conceptgraphs_2024}) and address long-term mapping \cite{nguyen_react_2025,schmid_khronos_2024}.

Scene graphs are increasingly being adopted as representations for task and motion planning (TAMP). Two main directions appear: leveraging scene-graph structure and hierarchy to build planning domains or to plan iteratively across levels \cite{ray2024task,dai_optimal_2024,ejaz_situationally-aware_2025,ni_grid_2024}, and using large language models (LLMs) that consume the scene graph as context \cite{rana_sayplan_2023,ni_grid_2024}. Because planning on full graphs is complex, most of these methods decompose them through either: high-to-low abstraction planning that focuses on nodes relevant to the current plan \cite{dai_optimal_2024,ni_grid_2024}; heuristics that compress dense subgraphs (\eg{} aggregating equivalent nodes) \cite{ray2024task,agia2022taskography}; and iterative subgraph expansion, sometimes guided by LLMs \cite{rana_sayplan_2023}.
Some existing methods (\eg{}~\cite{agia2022taskography}), provide guarantees in specific symbolic planning settings, while many heuristic, learned, and LLM-guided reductions are evaluated empirically. Complementary to these methods, we provide a formal framework for determining when a derived scene graph induced by pruning (but also merging) preserves information needed for planning.

More broadly, solving robotic planning problems requires choosing a model of the robot–environment dynamics, which also determines plan existence and feasibility in the original system. A key line of work asks what information from action–observation histories must be retained for tasks to be solvable: histories are mapped to \textit{information states} (equivalence classes), yielding minimally sufficient quotients that can simulate history-based policies \cite{sakcak2024mathematical,subramanian2022approximate,mcfassel2023reactivity,sakcak2025minimally}.
These ideas align with formal methods, where abstractions preserve properties via simulation and refinement \cite{kress2018synthesis}. In TAMP literature, PDDL~\cite{Ghallab1998pddl} abstracts geometric and differential constraints into predicates and discrete actions that must be grounded by motion planning; because even few objects and predicates induce large state spaces, structure-preserving simplifications are pursued \cite{horvcik2022homomorphisms,allen2026simplifying}.

Together, these works show that scene graphs are useful planning representations but abstraction is necessary for scalability. However, existing scene-graph planning approaches typically evaluate abstractions empirically. In contrast, information-space and formal-abstraction methods investigate whether a representation preserves the necessary task-related information for planning or plan execution. Here, we bridge these directions by casting scene-graph reduction as an information mapping and by giving conditions under which a derived scene graph is sufficient for planning.


\section{Sufficient Models for Planning}\label{sec:theory}
\subsection{Notation}
For a given set $A$, we write $A^{\leq\Nat}$ to denote the set of all finite length sequences of the elements of $A$. Given a function $f:A \rightarrow B$, $f(A)$ denotes its image and the preimage of $b \in B$ under $f$ is denoted by $f\preimg(b)$.
Given a non-injective function $g:A \rightarrow B$, $[a]_g := \{ a' \in A \mid g(a)=g(a') \}$ denotes an equivalence class induced by $g$. Finally, we will write $A/g:=\{ [a]_g \mid a \in A \}$ to denote the set of all equivalence classes induced by $g$. Given two functions $g_1:A\rightarrow B_1$ and $g_2:A\rightarrow B_2$, \emph{$g_1$ is a refinement of $g_2$} if the partitioning over $A$ induced by $g_1$ refines that of $g_2$.

A transition system is a triple $\mathcal{S}=(S,\Lambda,T)$, in which $S$ is the set of states, $\Lambda$ is the set of names for the outgoing transitions, and $T \subset S \times \Lambda \times S$ is a ternary relation describing the transitions. If for each $(s,\lambda) \in S \times \Lambda$ there is a unique $s' \in S$ such that $(s,\lambda,s') \in T$, then we will write this system as $(S, \Lambda, \tau)$ in which $\tau: S\times \Lambda \rightarrow S$ is a function. Given a transition system $\mathcal{S}=(S,\Lambda,\tau)$ a (finite) \emph{trace} of $\mathcal{S}$ is a sequence of pairs $(s_1, \lambda_1, \dots, s_{N-1},\lambda_{N-1}, s_N)$ for some $N$, satisfying that $s_i \in S$ and $\lambda_i \in \Lambda$ for all $i=1,\dots,N$, and respect $\tau$ such that for all $i=1,\dots,N-1$, $s_{i+1}=\tau(s_i,\lambda_i)$.
Given a labeling function $\sigma: S \rightarrow L$, in which $L$ is a set of labels, the quotient of $\mathcal{S}$ by $\sigma$ is the transition system $(S/\sigma,\Lambda,T/\sigma)$, denoted by $\mathcal{S}/\sigma$, in which $S/\sigma=\{[s]_\sigma \mid s\in S\}$ and $T/\sigma=\{\left([s]_\sigma, \lambda, [s']_\sigma\right) \mid (s, \lambda, s') \in T\}$.

\subsection{Planner Perspective}
A robot embedded in and interacting with an environment is modeled as two coupled transition systems, named \emph{internal} and \emph{external} systems in~\cite{sakcak2024mathematical}.
The external system corresponds to the physical environment, including the robot body, and the internal system represents the perspective of a decision maker.
Let $\X=(X,U,f)$ be a transition system representing the external system, in which, $X$ and $U$ are the external state and action spaces, $f: X \times U \rightarrow X$ is a state transition function such that $f(x,u)$ is the state reached after taking action $u$ at state $x$. In this work, we extend the framework presented in~\cite{sakcak2024mathematical} by introducing the \emph{planner perspective} for the internal system.
Given a task description, we consider a planner that systematically explores the space of all finite sequences of actions over a given model and determines one that satisfies the given task.
Tasks are modeled over the finite external system traces.

\begin{definition}[Task]
    Given $\X$, let$\Omega_\X$ denote the set of all traces of $\X$.
    A task $T$ for an external system $\X$ is the subset $T \subseteq \Omega_\X$.
\end{definition}

Given a task, a plan is determined so that the robot exhibits the desired behavior, in which case it is called a \emph{feasible plan for task $T$}. In this work, we focus on plans that are represented as a sequence of actions, as this is typical for many planning problems, including many for which scene graphs have been proposed, \eg{}~\cite{ray2024task,rana_sayplan_2023}.

\begin{definition}[Feasible plan for $T$]
    Given an external system $\X=(X,U,f)$ initialized at $x_1 \in X$, let $\omega_{\hisu(x_1)} = (x_1, u_1, x_2, \dots, u_{N-1},x_N) \in \Omega_\X$ be a trace corresponding to a plan $\hisu(x_1) = (u_1, u_2, \dots, u_{N-1})$ of some length $N-1$. A plan $\hisu(x_1)$ is called a \emph{feasible plan for $T$} if $\omega_{\hisu(x_1)} \in T$.
\end{definition}

Given a task $T$ for an external system $\X$, the objective of a planner is to determine a feasible plan for $T$. However, typically, the planner is given a \emph{model} of the external system over which also the task $T$ is transcribed. Let $\M=(X_M,U,f_M)$ be a \emph{model of the external system}, in which $X_M$ is the set of states, $U$ is the set of actions, and $f_M$ is a state transition function. Given $\M$, an initial state $x_1\in X_M$, and a task $T_M \subseteq \Omega_\M$, a planner systematically explores $X_M$ and the space of action sequences consistent with $\M$.
A solution is found if a particular sequence of actions results in a trace that belongs in $T_M$.
Two questions arise: (i) can the task $T$ be expressed over the traces of $\M$? (ii) is a plan feasible for $T_M$ also feasible for $T$? These questions are addressed in the following subsection.

\subsection{Sufficient Models for Planning}
To determine conditions under which feasible plans determined by a planner using a particular model are feasible for the external system as well, we focus on models that are quotient systems derived from an external system $\X=(X,U,f)$ through a labeling function $\imap: X \rightarrow X_M$ named an \emph{information mapping (I-map)}. Given $\imap$, the quotient of $\X$ by $\imap$ is the transition system $\M:=(X/\imap, U, f/\imap)$.
At the highest level, a model of the external system is the external system itself, that is, $\imap$ is a bijection and $\M=\X$. However, in most cases this model is prohibitive for search, and we are interested in determining a ``simpler'' one as a quotient system obtained by a many-to-one mapping $\imap$. A crucial requirement on this quotient system is that it should still allow determining a feasible plan for task $T$. Note that the quotient system may not be deterministic as this depends on the selected labeling function $\imap$.
Suppose for all $x,s,x',s' \in X$ and all $u\in U$, the following holds:
\begin{equation}\label{eq:sufficiency}
    \begin{aligned}
        \imap(x)=\imap(s) \,\land\, x'=f(x, u) \,\land \, s'=f(s, u) \implies \\
        \imap(x')=\imap(s').
    \end{aligned}
\end{equation}
Then, the resulting quotient system $(X/\imap, U, f/\imap)$ is deterministic as well~\cite{sakcak2024mathematical}.
A labeling function $\imap$ is called \emph{sufficient} if it satisfies the implication given in \eqref{eq:sufficiency}.
Thus, if $\imap$ is sufficient, the labels can be uniquely determined when limited to the quotient system.

When using a \emph{derived system} for planning, it is natural to ask whether the task can still be expressed over the traces of this new system. Given $\X=(X,U,f)$ and a derived system $\M=(X_M,U,f_M):=\X/\imap$,
let $\Omega_\X$ and $\Omega_{\M}$ be the corresponding sets of traces. Furthermore, let $c_\imap:\Omega_\X \rightarrow \Omega_{\M}$ be a mapping, named \emph{I-map condenser}, under which
\begin{equation}
    (x_1,u_1,\dots,x_N) \mapsto (\imap(x_1),u_1,\dots,\imap(x_N)).
    \label{eq:cImap}
\end{equation}
A task $T$ is called \emph{well-posed under $c_{\imap}$} if $c_{\imap}\preimg(c_{\imap}(T))=T$~\cite{sakcak2025limits}.
Informally, well-posedness means that a task expressed over $\Omega_\X$ can similarly be expressed over $\Omega_{\M}$ without losing any task-related information. Thus, because task accomplishment information can be determined solely by the derived system traces without ambiguities, we can say that the derived system is rich enough for planning purposes.

\begin{proposition}\label{prop:poseable_feasible} Let $\X=(X, U, f)$ be an external system and $\M=(X_M,U,f_M):=\X/\imap$ be a model. Let $T \subseteq \Omega_\X$ be a task. Suppose $\imap$ is sufficient and $T$ is well-posed under $c_\imap$. Let $T_\M = c_\imap(T)$ be the task posed over $\M$ such that $T_\M \subseteq \Omega_\M$. Then, any feasible plan for $T_\M$ is also a feasible plan for $T$.
\end{proposition}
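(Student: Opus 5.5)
The plan is to push a plan forward through $\imap$ and back again. Let $\hisu(\imap(x_1))=(u_1,\dots,u_{N-1})$ be a feasible plan for $T_\M$ from the model's initial state $\imap(x_1)$, where $x_1\in X$ is the true initial state. Its model trace $\omega_\M=(\imap(x_1),u_1,y_2,\dots,u_{N-1},y_N)$ then lies in $T_\M=c_\imap(T)$. Executing the same action sequence on $\X$ from $x_1$ gives the external trace $\omega=(x_1,u_1,x_2,\dots,u_{N-1},x_N)$ with $x_{i+1}=f(x_i,u_i)$. The goal is to show $\omega\in T$. By well-posedness, $T=c_\imap\preimg(c_\imap(T))=c_\imap\preimg(T_\M)$, so it suffices to show $c_\imap(\omega)=\omega_\M$.

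\textbf{Step 1: the quotient is a function.} Sufficiency of $\imap$ makes $f/\imap$ a function $f_M$. Suppose $([x]_\imap,u,[x']_\imap)$ and $([s]_\imap,u,[s']_\imap)$ are in $f/\imap$ with $[x]_\imap=[s]_\imap$. Choosing representatives with $x'=f(x,u)$ and $s'=f(s,u)$, implication \eqref{eq:sufficiency} gives $\imap(x')=\imap(s')$. Hence $f_M(\imap(x),u)=\imap(f(x,u))$ for every $x\in X$, $u\in U$, with no ambiguity in the choice of representative. This is the point where sufficiency is essential: without it, the model trace could follow a branch that no external state realizes.

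\textbf{Step 2: the traces match.} By induction on $i$, I show $y_i=\imap(x_i)$. The base case $y_1=\imap(x_1)$ holds by construction. For the inductive step, $y_{i+1}=f_M(y_i,u_i)=f_M(\imap(x_i),u_i)=\imap(f(x_i,u_i))=\imap(x_{i+1})$ by Step 1. Therefore $c_\imap(\omega)=\omega_\M\in c_\imap(T)$, so $\omega\in c_\imap\preimg(c_\imap(T))=T$, and $\hisu$ is feasible for $T$.

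I expect the main obstacle to be bookkeeping rather than anything deep. I need to identify the model's initial state with $\imap(x_1)$ and state clearly that states of $X/\imap$ are identified with labels in $\imap(X)\subseteq X_M$. Well-posedness is used only for the inclusion $c_\imap\preimg(c_\imap(T))\subseteq T$. Without it, a different external trace $\omega'\in T$ with $c_\imap(\omega')=c_\imap(\omega)$ could make $\omega_\M\in T_\M$ even though $\omega\notin T$. I would note this in a remark.
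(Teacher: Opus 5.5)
Your proof is correct and takes essentially the same route as the paper. You execute the model plan on $\X$ from $x_1$, use sufficiency of $\imap$ to show that $c_\imap$ of the resulting external trace is the unique model trace from $\imap(x_1)$, and then use well-posedness, $c_\imap\preimg(c_\imap(T))=T$, to conclude the external trace lies in $T$. Your Step~1 and the induction in Step~2 simply spell out the identity $f_M(\imap(x),u)=\imap(f(x,u))$, which the paper invokes in a single sentence.
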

\begin{proof}
    Let $\hisu(s_1)=(u_1,\dots,u_N)$ be a feasible plan for $T_M$ from $s_1=\imap(x_1)$ and let $\omega_\X$ be the corresponding trace of $\X$ generated by the same sequence from $x_1$. Since $\imap$ is sufficient, the condensed trace $c_\imap(\omega_\X)$ is the unique trace $\omega_\M$ generated in $\M$ from $\imap(x_1)$. Because $\hisu(s_1)$ is feasible for $T_\M=c_\imap(T)$, $\omega_\M \in c_\imap(T)$. Because $T$ is well-posed under $c_\imap$, $\omega_\X \in c_\imap^\preimg(c_\imap(T))=T$. Therefore, the sequence of actions $\hisu(s_1)$ is feasible for $T$ from $x_1 \in X$.
\end{proof}

Let $c_{\subtask} : \Omega_\X \rightarrow \{0,1\}$ such that $c_\subtask(\omega)=1$ if $\omega \in T$ and $0$ otherwise.
\begin{lemma}
    Task $T$ is well-posed under $c_\imap$ if and only if $c_\imap$ is a refinement of $c_\subtask$.
    \label{lem:refinement}
\end{lemma}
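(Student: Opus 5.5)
I will prove both directions directly from the definitions, unpacking refinement in terms of equivalence classes. By the notation section, $c_\imap$ refines $c_\subtask$ means that every class $[\omega]_{c_\imap}$ is contained in some class of $c_\subtask$. Equivalently, for all $\omega,\omega'\in\Omega_\X$, $c_\imap(\omega)=c_\imap(\omega')$ implies $c_\subtask(\omega)=c_\subtask(\omega')$. Since $c_\subtask$ is the indicator of $T$, this says: whenever two traces have the same condensed trace, either both are in $T$ or both are outside $T$. The lemma then reduces to showing that this property is equivalent to $c_\imap\preimg(c_\imap(T))=T$. I also note that the inclusion $T\subseteq c_\imap\preimg(c_\imap(T))$ always holds for any map, so well-posedness is equivalent to the reverse inclusion $c_\imap\preimg(c_\imap(T))\subseteq T$.

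For the direction refinement $\Rightarrow$ well-posed, I take $\omega\in c_\imap\preimg(c_\imap(T))$. Then $c_\imap(\omega)=c_\imap(\omega')$ for some $\omega'\in T$. By refinement, $c_\subtask(\omega)=c_\subtask(\omega')=1$, so $\omega\in T$. This gives the reverse inclusion, hence equality.

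For the direction well-posed $\Rightarrow$ refinement, I take $\omega,\omega'$ with $c_\imap(\omega)=c_\imap(\omega')$ and argue by contradiction. Suppose $c_\subtask(\omega)\neq c_\subtask(\omega')$, say $\omega'\in T$ and $\omega\notin T$. Then $c_\imap(\omega)=c_\imap(\omega')\in c_\imap(T)$, so $\omega\in c_\imap\preimg(c_\imap(T))=T$, which is a contradiction. Hence $c_\subtask(\omega)=c_\subtask(\omega')$.

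The only delicate point is the degenerate use of ``refinement'' for possibly injective maps, and for $c_\subtask$ when $T$ is empty or all of $\Omega_\X$. The notation section only defines $[a]_g$ for non-injective $g$. I will state explicitly that I read the refinement as a partition-level statement valid for any function. With that reading, these edge cases are covered by the same argument: for example, if $T=\emptyset$, both sides of the equivalence hold trivially. I do not expect any real obstacle beyond this clarification.
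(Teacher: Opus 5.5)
Your proof is correct and follows essentially the same argument as the paper: in both directions, the key fact is that well-posedness fails exactly when two traces have the same condensed trace but different task labels. The only difference is cosmetic. You prove the refinement $\Rightarrow$ well-posed direction directly after stating the automatic inclusion $T\subseteq c_\imap^{-1}(c_\imap(T))$, whereas the paper argues by contradiction and uses that inclusion implicitly.
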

\begin{proof}
    ($\implies$ direction) Suppose $T$ is well-posed under $c_\imap$. For contradiction, suppose $c_\imap$ does not refine $c_\subtask$. Then there exist $\omega,\omega' \in \Omega_\X$ such that $c_\imap(\omega)=c_\imap(\omega')$, but $c_\subtask(\omega)\not=c_\subtask(\omega')$. Let $\omega \in T$ and $\omega' \not\in T$. Since $c_\imap(\omega)=c_\imap(\omega')$ and $\omega \in T$ we have $\omega' \in c_\imap\preimg(c_\imap(T))$ which contradicts with well-posedness.
    ($\impliedby$ direction) For contradiction suppose $c_\imap\preimg(c_\imap(T)) \not= T$. Then there exist $\omega \in T$ and $\omega' \not\in T$ with $c_\imap(\omega)=c_\imap(\omega')$.
    Refinement then implies that $\omega,\omega' \in c_\subtask\preimg(1)$, contradicting $\omega'\not\in T$.
\end{proof}

The following definition combines two requirements for a useful model. Sufficiency of $\imap$ guarantees deterministic behavior of the model transitions and sufficiency of $c_\imap$, while well-posedness guarantees that task satisfaction is determined completely by the model trace.
\begin{definition}[Sufficient model for planning]\label{def:sufficient_model} Given an external system $\X$, a model $\M = \X/\imap$ is called \emph{sufficient for planning for task $T$} if (i) $\imap$ is sufficient, and (ii) the task $T$ is well-posed under $c_\imap$.
\end{definition}

\subsection{Motion Primitives}\label{sec:motion_prim}
The quotient structure introduced in the previous section assumes that the original and the derived system share the same set of actions. However, often a set of higher-level actions (motion primitives) that correspond to a sequence from set $U$ is used, typically parametrized by the system states. For example, an action such as move to a region may require executing a sequence of lower-level actions whose intermediate external states are not represented in the model. The next definition formalizes this notion.

\begin{definition}[Motion primitive] Let $\X=(X,U,f)$ be an external system. A motion primitive $\sigma$ is a mapping $\sigma: X \rightarrow U^\lessNat$ assigning to every state $x\in X$ a finite action sequence $\sigma(x)=(u_1,\dots,u_N)$. Starting from some $x_1=x$, the execution of $\sigma(x)$ induces the finite trace
    $$\bar{f}(x,\sigma) := \left(x_1, u_1, x_2, \dots, u_N, x_{N+1}\right),$$
    in which $x_{i+1} = f(x_i, u_i)$. The corresponding endpoint transition is $\diamond_f(x,\sigma):=x_{N+1}.$
\end{definition}

We can consider a set of motion primitives to be the new set of actions for the external system. Let $\Sigma$ denote the set of motion primitives.
Using $\Sigma$ as the set of actions, the system restricted by the motion primitives is given by $\X_\Sigma = (X, \Sigma, \diamond_f)$ and we use ${\Omega}_{\X_\Sigma}$ to denote its traces.

Given a sequence of actions $\tilde{\sigma} \in (\Sigma)^N$ for some $N$, the respective trace of $\X_\Sigma$ is given by
$$\omega_{\tilde{\sigma}}=(x_1, \sigma_1, x_2, \sigma_2,\dots, \sigma_N, x_{N+1}) \in \Omega_{\X_\Sigma},$$
in which $x_{i+1}=\fMP(x_i,\sigma_i)$, $i=1,\dots,N$. Each motion primitive $\sigma_i$ induces a finite trace $\fTr(x_i,\sigma_i)$ of $\X$. Hence, the corresponding trace in $\X$ is obtained by concatenating the subtraces induced by the individual motion primitives,
\begin{equation*}
    \bar{\omega}_{\tilde{\sigma}} = \fTr(x_1,\sigma_1)\cat\fTr(x_2,\sigma_2)\cat\dots\cat \fTr(x_N,\sigma_N),
\end{equation*}
in which $\cat{}$ denotes the concatenation of traces with the common boundary states appearing only once.
Let
$$ \bar{\Omega}_{\X_\Sigma}:=\{ \bar{\omega}_{\hissigma} \in \Omega_{\X} \mid \hissigma \in (\Sigma)^N, N<\infty \} \subseteq \Omega_\X$$
denote the set of external system traces generated by finite sequences of motion primitives. Since each $\bar{\omega}_{\hissigma}$ is defined through its generating primitive sequence $\hissigma$, its decomposition into motion primitive induced subtraces is fixed by construction. We then define the motion-primitive condenser
$ c_\MP : \bar{\Omega}_{\X_\Sigma} \rightarrow {\Omega}_{\X_\Sigma}$ by $c_\MP(\bar{\omega}_{\tilde{\sigma}}) = \omega_{\tilde{\sigma}}$. Since restricting the action set to motion primitives may exclude some traces of $\Omega_\X$, we define the task considered by this restricted system as $T_\Sigma := T \cap \bar{\Omega}_{\X_\Sigma}$.

For a set $X_0 \subseteq X$ of admissible initial states, we say that $\Sigma$ is \emph{complete for $T$} if $\forall x_1 \in X_0$, $T_{x_1} \not= \emptyset \implies (T_\Sigma)_{x_1}\not=\emptyset$, in which the subscript $x_1$ denotes the respective traces beginning at $x_1$. If $\Sigma$ is complete for $T$, then existence of feasible plan from $x_1$ over $\X$ implies that there exists a feasible plan from $x_1$ over $\X_\Sigma$.


\section{Scene Graph Transition Systems}
In this section, we introduce the notion of \emph{scene graph transition system} as an application of the theory presented in Sec. \ref{sec:theory}. When defining the scene graph, we follow the conventions used in previous work~\cite{hughes2022hydra,tang_openobject-nav_2025}. We will also introduce a sample set of actions relevant to simple navigation and manipulation, similar to~\cite{ray2024task}.

\subsection{Scene graph representation}
A scene graph $G=(V,E)$ maps spatio-semantic relations among scene elements via edges $E$ over nodes $V$. Nodes represent spatial elements (\eg{} tables, robots, bedrooms) with spatial features (pose/configuration, bounding box) and semantic features (class, affordances, color). Following \cite{hughes2022hydra}, we use five node categories: \emph{building}, \emph{rooms}, \emph{places}, \emph{objects}, and \emph{agents}. A \emph{place} is a region within a room obtained by space decomposition, \eg{} GVD \cite{oleynikova_sparse_2018} or 2D polygons from meshes \cite{ray2024task}, such that each object/agent lies entirely in one place and all navigable positions within a place are path-connected. We denote building, room, place, and object, and agent node sets as $V_b$, $V_r$, $V_p$, $V_o$, and $V_a$, respectively. In this work, we take $V_a:=\{v_a\}$, in which $v_a$ is the node representing the robot.
Many scene graphs include a 3D mesh \cite{hughes2022hydra} or point cloud \cite{tang_openobject-nav_2025} as the global spatial frame; we omit this here since it mainly supports localization and motion planning rather than abstract reasoning or task planning. We consider three types of edges, encoding \textit{inclusion}, \textit{adjacency}, and \textit{carrier} relations, though additional relations (\eg{} functional \cite{rotondi_fungraph_2025} or event \cite{nguyen_event-grounding_2026}) are possible.

\begin{definition}[General Inclusion Relation]\label{def:rel_ginc}
    A \emph{general inclusion relation} is an asymmetric and transitive binary relation denoted by $\R_\ginc \subseteq V \times V$. We say $v_j$ is included in $v_i$ if $(v_i, v_j) \in \R_\ginc$. Let $v_{s}$ be the root node which is the largest spatial category such that all nodes in $V \setminus \{v_s\}$ are included in $v_s$. Then, $\R_\ginc$ satisfies that for any $v_j \in V \setminus \{v_s\}$ exists a $v_i \in V$ such that $(v_i, v_j) \in \R_\ginc$.
\end{definition}
\begin{definition}[Reduced Inclusion Relation]\label{def:rel_inc}
    A reduced inclusion relation, alternatively, an \textit{inclusion relation}, is a binary relation $\R_\inc \subseteq \R_\ginc$ obtained by removing from $\R_\ginc$ the pairs $(v_i,v_j)$ satisfying that there exists a $v_k$ such that $(v_i,v_k)\in \R_\ginc$ and $(v_k,v_j)\in \R_\ginc$.
\end{definition}

When considering spatial elements at progressively finer resolutions (\eg{} from buildings to objects), inclusion relations are used to introduce a spatial hierarchy (\eg{} the robot being in a specific place). We assume each spatial element $v_j$ to only have one parent, \ie{} $\forall v_j \in V \setminus \{v_s\}, \{v_i \mid (v_i , v_j) \in \R_\inc\}$ is a singleton.
Under this assumption, inclusion relations (Definition~\ref{def:rel_inc}) form a hierarchical tree $\T_\inc = (V,\R_\inc)$.
For concreteness, in this paper, we use a tree with depth three, with nodes at each depth representing respectively the building ($V_0=V_b$, consisting of the root node $v_s$ alone), rooms ($V_1=V_r$), places ($V_2=V_p$), and objects and agents ($V_3=V_o\cup V_a$).

Let $\mathcal{F}(v_p)$ denote the free configuration space associated with a place node $v_p \in V_p$. We assume that each $\mathcal{F}(v_p)$ is path connected.

\begin{definition}[Adjacency Relation]\label{def:rel_adj}
    An adjacency relation is a symmetric irreflexive relation $\R_{\adj} \subseteq V_p \times V_p$. Two place nodes $v_i,v_j \in V_p$ are adjacent, written $(v_i,v_j) \in \R_{\adj}$, if there exists a path contained in $\mathcal{F}(v_i) \cup \mathcal{F}(v_j)$, or in its closure, connecting a point in $\mathcal{F}(v_i)$ to a point in $\mathcal{F}(v_j)$.
\end{definition}

\begin{definition}[Carrier Relation]\label{def:rel_conn}
    A carrier relation is an irreflexive and acyclic relation over $V_a$ and $V_o$, \ie{} $\R_{\con} \subseteq (V_o\cup V_a) \times V_o$.
    We say that $v_i$ is carrying $v_j$ if $(v_i,v_j) \in \R_{\con}$. The relation represents a kinematic link, in such a way that if $v_i$ moves, then $v_j$ moves with it.
\end{definition}

\begin{definition}[Scene Graph]\label{def:scene_graph}
    A \emph{scene graph} is an edge-labeled directed graph $G=(V,E)$ in which $E=\R_\inc \cup \R_\adj \cup \R_\con$ together with an edge labeling function $\ell_e: E \rightarrow \L $ that attributes each edge a label from the set $\L=\{\inc,\adj,\con\}$ consistent with the relation that generated the edge.
\end{definition}

This definition makes explicit the scene graph structure that prior works use implicitly (\eg{}~\cite{hughes2022hydra,ray2024task,rana_sayplan_2023,tang_openobject-nav_2025,nguyen_fast_2026}); to our knowledge, this is the first formal statement of it.
In the following, we will use $E(G)$ and $V(G)$ to denote the edges and nodes of $G$, respectively. We will use $(v_i,v_j)_\ell$ to denote the label $\ell \in \L$ of the edge connecting $(v_i,v_j)$.

\begin{figure}
    \centering
    \begin{subfigure}{0.3\linewidth}
    \centering
    \begin{tikzpicture}[
        scale=0.4,
        every node/.style={scale=0.8}
        ]
        \begin{scope}[rotate=-90]
            \node at (0.5,-0.5) {$r_1$};
            \node at (4,-0.5) {$r_2$};
            \node at (7,-0.5) {$r_3$};
            
            \draw[fill=yellow!20, draw=black] (0,0) rectangle (1.5,2);
            \node at (0.75,0.5) {$t$};
            \draw[fill=yellow!20, draw=black] (0.6,1.1) rectangle (1.4,1.9);
            \node at (1,1.5) {$m$};
            \draw[fill=yellow!20, draw=black] (8,2) rectangle (9,0);
            \node at (8.5,1) {$d$};
            \node[fill=green!25,draw=black,circle] at (7.5,3.2) {$a$};
    
            \draw[draw=black!20] (2,0) -- (2,4);
            \draw[draw=black!20] (3.5,0) -- (3.5,4);
            \draw[draw=black!20] (4,0) -- (5,2.5) -- (6,0);
            \draw[draw=black!20] (5,2.5) -- (5,4);
            \draw[draw=black!20] (6.5,0) -- (6.5,4);
            
            \node at (0.75,3.2) {$p_1$};
            \node at (2.75,3.2) {$p_2$};
            \node at (4.25,3.2) {$p_3$};
            \node at (5,0.8) {$p_4$};
            \node at (5.75,3.2) {$p_5$};
            \node at (7.25,0.8) {$p_6$};
            
            \draw[black, very thick] (0,0) rectangle (9,4);
    
            \draw[black, very thick] (3.5,0) -- (3.5,2);
            \draw[black, very thick] (3.5,3) -- (3.5,4);
            \draw[black, very thick] (6.5,0) -- (6.5,1);
            \draw[black, very thick] (6.5,2) -- (6.5,4);
        \end{scope}
    \end{tikzpicture}
    \caption{Environment $e$}\label{fig:example_1_a}
    \end{subfigure}
    \hfil
    \begin{subfigure}{0.65\linewidth}
    \centering
    \begin{tikzpicture}
      [level distance=9mm,
       every node/.style={fill=yellow!80,circle,draw,minimum size=6mm,scale=0.8},
       level 1/.style={sibling distance=20mm,nodes={fill=yellow!60}},
       level 2/.style={sibling distance=8mm,nodes={fill=yellow!40}},
       level 3/.style={sibling distance=8mm,nodes={fill=yellow!20}},
       dot/.style={circle, fill, minimum size=6pt, inner sep=0pt, fill=black!70, draw=none},
       edge from parent/.style={->,draw}]
      \node {$e$}
         child {node (r1) {$r_1$}
            child {node (p1) {$p_1$} 
                child {node (m) {$m$}}
                child {node (t) {$t$}}
            }
            child {node (p2) {$p_2$}}
         }
         child {node (r2) {$r_2$}
            child {node (p3) {$p_3$}}
            child {node (p4) {$p_4$}}
            child {node (p5) {$p_5$}}
         }
         child {node (r3) {$r_3$}
            child {node (p6) {$p_6$}
                child {node[fill=green!25] (a) {$a$}} 
                child {node (d) {$d$}}
            }
         };
      \draw[double,->] (t) -- (m);
      \draw[dashed] (p1) -- (p2);
      \draw[dashed] (p2) -- (p3);
      \draw[dashed] (p3) -- (p4);
      \draw[dashed] (p4) -- (p5);
      \draw[dashed] (p3) .. controls +(down:6mm) and +(down:6mm) .. (p5);
      \draw[dashed] (p5) -- (p6);
    \end{tikzpicture}
    \caption{Scene graph $G_1$}\label{fig:example_1_b}
    \end{subfigure}
    \caption{The environment from Example \ref{ex:1_scene} (a), and its scene graph $G_1$ (b). \captionedges{}}
    \label{fig:example_1}
    \figvspace{}
\end{figure}

\begin{example}[Scene graph] \label{ex:1_scene}
    Consider the environment $e$ (Fig. \ref{fig:example_1_a}) consisting of three rooms ($r_1$, $r_2$, $r_3$), with both $r_1$ and $r_3$ only adjacent to $r_2$ (\ie{} an open door exists between these rooms). Room $r_1$ is subdivided into two places ($p_1$, $p_2$), room $r_2$ in three places ($p_3$, $p_4$,  $p_5$), while room $r_3$ is composed of only one place ($p_6$). Places are adjacent to each other according to Fig. \ref{fig:example_1_a}. Currently, $p_1$ contains a mug $m$ onto a table $t$, while $p_6$ contains a desk $d$ and the robotic agent $a$. The scene graph $G_1$ for this environment is shown in Fig. \ref{fig:example_1_b}.
    The spatial and semantic features of elements in $V_o(G_1)$ (\eg{} pose, bounding box, semantic class) are omitted from this example for brevity.
\end{example}

\subsection{Scene graph transition system}\label{sec:sg_ts}

Using scene graphs as its states, a robot-environment system is modeled as a deterministic transition system.

\begin{definition}[Scene Graph Transition System] Let $\G$ be the set of all possible scene graphs for a given robot-environment system satisfying that for any $G, G' \in \G$, $V(G)=V(G')$. A scene graph transition system is the tuple $\X = (\G, U(\G), f)$, in which $U(\G)$ is the set of actions consistent with the scene graph representation and $f: \G \times U(\G) \rightarrow \G$ is the state transition function.
\end{definition}

To formally define the scene graph transition function, we first introduce a set of actions. Note that different sets of actions are possible depending on the scene graph representation and robot embodiment. However, we assume that all action sets are parametrized by the graph nodes and encode possible ways a robot can interact with its environment. Interactions could be for example, navigation, manipulation, and inspection. In the following, we introduce the set of actions that we will refer to in the examples used throughout the paper.
\begin{definition}[Action set of $\G$]\label{def:action_set}
    Let $\G$ be the set of all scene graphs, let $U_\move (\G)$, $U_\picktext(\G)$, and $U_\placeontext(\G)$ be the sets of actions parametrized by the graph nodes such that
    \begin{align*}
        U_\move (\G)       & := \{\moveto(v_{p_i}) \mid {v_{p_i}\in V_p(\G)}\},                  \\
        U_\picktext(\G)    & := \{\pick(v_{o_i}) \mid {v_{o_i} \in V_o(\G)}\},                   \\
        U_\placeontext(\G) & :=\{\placeon(v_{o_i},v_{o_j}) \mid {v_{o_i},v_{o_j} \in V_o(\G)}\},
    \end{align*}
    in which $V_p(\G)$ and $V_o(\G)$ are the sets of all place nodes and object nodes, respectively. Then, the set of actions for a space of scene graphs $\G$ is
    $U(\G) := U_\move (\G) \cup  U_\picktext(\G) \cup U_\placeontext(\G).$
\end{definition}

The set of actions given in Definition~\ref{def:action_set} are interpreted over the scene graphs in the following ways.
\subsubsection{Move action}
Let $G$ be a scene graph and let $G':=f(G, \moveto(v_p'))$. Let $v_p$ correspond to the place the robot is in, that is, $(v_p, v_a)_{\inc}\in E(G)$, in which $v_a$ is the node representing the robot. If $(v_p, v_{p'})_{adj} \in E(G)$, applying the action $\moveto(v_{p'})$ removes the inclusion edge $(v_{p}, v_a)_\inc$ and adds $(v_{p'}, v_a)_\inc$. Similarly, for any $v_o$ satisfying that $(v_a,v_o)_\con \in E(G)$, the inclusion edge $(v_{p}, v_o)_\inc$ is removed and $(v_{p'}, v_o)_\inc$ is added. Moving an object from one place to another might change whether two places are path connected. The adjacency relations, consequently the scene graph edges, are updated accordingly with the move of an object.
If $(v_p, v_{p'})_{adj} \not\in E(G)$, $(G, \moveto(v_{p'})) \mapsto G$ under $f$.

\subsubsection{Pick action}
A pick action takes an object node as a parameter. Applying $\pick(v_{o})$ action, in which $v_o$ is an object node in place $v_p$, adds the carrier edge $(v_a,v_{o})_\con$ to $E(G)$ if $(v_p, v_o)_\inc, (v_p, v_a)_\inc \in E(G)$, that is the object $o$ is picked up by the robot if they are in the same place, and if the cardinality of the carrier relations the robot is in does not exceed a predefined number. Furthermore, any previous carrier edge that involves $v_o$ is removed. Otherwise, $(G, \pick(v_{o})) \mapsto G$ under $f$.

\subsubsection{Placeon action}
A $\placeon(\cdot)$ action is parametrized by a pair of object nodes $(o_i, o_j)$. Assume $G$ contains a carrier edge $(v_a,v_{o_i})_\con$, and inclusion edges $(v_p, v_a)_\inc, (v_p, v_{o_i})_\inc, (v_p, v_{o_j})_\inc \in E(G)$ for some place node $v_p$, \ie{} the agent and both $o_i$ and $o_j$ are in the same place. Then, applying $\placeon(v_{o_i},v_{o_j})$ removes the carrier edge $(v_a,v_{o_i})_\con$ from $G$ and adds the carrier edge $(v_{o_j},v_{o_i})_\con$.
Otherwise, after applying this action the state remains the same.

\begin{example}[Bringing a mug to the desk] \label{ex:2_actions}
    Considering the starting scene graph $G_1$ from Example \ref{ex:1_scene} (shown in Fig. \ref{fig:example_1}), let us describe the transitions corresponding to a sequence of actions moving the robotic agent $a$ to room $r_1$, picking up the mug $m$, and placing it on the desk $d$ in $r_3$.
    \begin{enumerate}[label=(\alph*)]
        \item Starting in $p_6$, the robot $a$ first executes a sequence of $\moveto$ actions to reach $p_1$. Given that each $\moveto$ action changes only the inclusion relations between $a$ and nodes in $V_p$, the combined graph transition is
              \begin{align*}
                  f(f( & f(f(G_1, \moveto(p_5)), \moveto(p_3)), \\&\moveto(p_2)), \moveto(p_1)) = G_2
              \end{align*}
              with $E(G_2) = (E(G_1) \setminus (p_6,a)_\inc) \cup (p_1,a)_\inc$.
        \item The robot then executes a $\pick(m)$ action. The transition is $f(G_2, \pick(m)) = G_3$ with $E(G_3) = (E(G_2) \setminus (t,m)_\con) \cup (a,m)_\con$.
        \item The robot then executes a sequence of $\moveto$ actions to go back to $p_6$. The transition
              \begin{align*}
                  f(f( & f(f(G_3, \moveto(p_2)), \moveto(p_3)), \\&\moveto(p_5)), \moveto(p_6)) = G_4
              \end{align*}
              with $E(G_4) = (E(G_3) \setminus \{(p_1,a)_\inc,(p_1,m)_\inc\}) \cup \{(p_6,a)_\inc,(p_6,m)_\inc\}$.
        \item Finally, the robot executes a $\placeon(m,d)$ action. The graph transition $f(G_4, \placeon(m,d)) = G_5$ with $E(G_5) = (E(G_4) \setminus (a,m)_\con) \cup (d,m)_\con$.
    \end{enumerate}
\end{example}


\section{Sufficient Scene Graphs for Planning}
When a scene graph transition system is obtained through some perception pipeline (\eg{} \cite{hughes2022hydra}), we consider this to be a maximal model of the external system (physical world).
In this section, based on the framework introduced in Sec.~\ref{sec:theory}, we present a new perspective to analyze what is necessary from a scene graph given a planning task. To this end, we introduce the notions \emph{derived scene graph} and \emph{derived scene graph transition system} and discuss the conditions under which such a system is sufficient for planning.

\subsection{Derived Scene Graphs}\label{sec:derived_sg}
Let $\G$ be the set of all possible maximal scene graphs for a given robot-environment system. We will refer to $\G$ as the \emph{maximal scene graph space}. A \emph{derived scene graph space} $\G_\subder$ is then the codomain of some many-to one I-map $\imap: \G \rightarrow \G_\subder$.
We want $\G_\subder$, the codomain of $\imap$, to also be a set of scene graphs, \ie{} its elements satisfy Definition~\ref{def:scene_graph}.

The I-map $\imap$ is defined through merge and prune operations over the elements of $\G$. Specifically, we consider merging nodes in $V_p$ and $V_r$ (nodes corresponding to places and rooms) and pruning nodes in $V_o$ (nodes corresponding to objects).
To merge nodes of $G$, we define two equivalence relations respectively between the nodes in $V_p$ and in $V_r$ denoted by $\R_p \subseteq V_p \times V_p$ and $\R_r \subseteq V_r \times V_r$.
Nodes that are not merged with any other node form singleton equivalence classes.
Denote with $\overline{V}_o \subseteq V_o$ the set of object nodes to be pruned. Building nodes, robot node, and non-pruned object nodes are also treated as singleton equivalence classes. Let $\R$ denote the resulting equivalence relation over $V(G)$.
Then, the derived graph $G'=\imap(G)$ has node set $V(G') = \{ [v]_\R \mid v\in V(G)\setminus \overline{V}_o \}$.
We identify each node in $G'$ with its equivalence class.
Its edge set is obtained by mapping the endpoints of all edges not incident to pruned object nodes:
$$E(G')=\{ ([u]_\R , [v]_\R) \mid (u,v) \in E(G) \land u,v \not\in \overline{V}_o\}.$$
Edge labels are inherited from the original relation that generated them and self-loops created by merging adjacent nodes are discarded.
To ensure that the image of $\imap$ is a scene graph according to Definition~\ref{def:scene_graph} we require that
\begin{multline}\label{eq:sg_merge_constraint}
    (v_{r_i},v_{p_j})_\inc,(v_{r_k},v_{p_l})_\inc \in E(G) \land (v_{p_j},v_{p_l})\in \R_p \\ \implies (v_{r_i},v_{r_k})\in \R_r\enspace.
\end{multline}
This condition ensures that each node has a unique parent connected with an inclusion edge. Second, for every equivalence class of place nodes induced by $\R_p$, the subgraph induced by that class under $\R_\adj$ is connected. This ensures that each derived place node corresponds to a connected region of free configuration space.

\begin{figure}
    \centering
    \begin{subfigure}{.3\linewidth}
    \centering
    \begin{tikzpicture}
      [level distance=8mm,
       every node/.style={fill=yellow!80,circle,draw,minimum size=6mm,scale=0.8},
       level 1/.style={sibling distance=20mm,nodes={fill=yellow!60}},
       level 2/.style={sibling distance=13mm,nodes={fill=yellow!40}},
       level 3/.style={sibling distance=8mm,nodes={fill=yellow!20}},
       dot/.style={circle, fill, minimum size=6pt, inner sep=0pt, fill=black!70, draw=none},
       edge from parent/.style={->,draw}]
      \node {$e$}
         child {node (r123) {$r_{123}$}
            child {node (p12) {$p_{12}$}
                child {node (m) {$m$}}
                child {node (t) {$t$}}
            }
            child {node (p3456) {$p_{3456}$}
                child {node[fill=green!25] (a) {$a$}}  
            }
         };
      \draw[double,->] (t) -- (m);
      \draw[dashed] (p12) -- (p3456);
    \end{tikzpicture}
    \caption{$G'_1$}\label{fig:example_3_a}
    \end{subfigure}
    \hfil
    \begin{subfigure}{.65\linewidth}
    \centering
    \begin{tikzpicture}
      [level distance=9mm,
       every node/.style={fill=yellow!80,circle,draw,minimum size=6mm,scale=0.8},
       level 1/.style={sibling distance=20mm,nodes={fill=yellow!60}},
       level 2/.style={sibling distance=8mm,nodes={fill=yellow!40}},
       level 3/.style={sibling distance=8mm,nodes={fill=yellow!20}},
       dot/.style={circle, fill, minimum size=6pt, inner sep=0pt, fill=black!70, draw=none},
       edge from parent/.style={->,draw}]
      \node {$e$}
         child {node (r1) {$r_1$}
            child {node (p1) {$p_1$}}
            child {node (p2) {$p_2$}
                child {node (m) {$m$}}
                child {node (t) {$t$}}
            }
         }
         child {node (r2) {$r_2$}
            child {node (p3) {$p_3$}}
            child {node (p4) {$p_4$}
                child {node[fill=green!25] (a) {$a$}} 
            }
            child {node (p5) {$p_5$}}
         }
         child {node (r3) {$r_3$}
            child {node (p6) {$p_6$}}
         };
      \draw[double,->] (t) -- (m);
      \draw[dashed] (p1) -- (p2);
      \draw[dashed] (p2) -- (p3);
      \draw[dashed] (p3) -- (p4);
      \draw[dashed] (p4) -- (p5);
      \draw[dashed] (p3) .. controls +(down:5mm) and +(down:5mm) .. (p5);
      \draw[dashed] (p5) -- (p6);
    \end{tikzpicture}
    \caption{$\tilde{G}_1$, an element of $\imap_e^{-1}(G'_1)$}\label{fig:example_3_b}
    \end{subfigure}
    \caption{The derived graph $G'_1$ from Example \ref{ex:3_der} (a), followed by $\tilde{G}_1$ (b).
    \captionedges{}}
    \label{fig:example_3}
    \figvspace{}
\end{figure}

\begin{example}[Preimage of derived scene graph] \label{ex:3_der} Given the environment $e$ from Example~\ref{ex:1_scene}, we define $\G_e$ as the set of all possible maximal scene graphs of $e$. Let $\imap_e: \G_e\rightarrow{\G}_{\subder,e}$ be a mapping from $\G_e$ to a derived space $\G_{\subder,e}$ such that $\imap_e$: merges all rooms into $r_{123}$; merges places $p_1$ and $p_2$ in $p_{12}$; merges places $p_3$, $p_4$, $p_5$, and $p_6$ into $p_{3456}$; and prunes the desk node $d$, if present. Let the scene graph $G'_1 \in \G_{\subder,e}$, shown in Fig. \ref{fig:example_3_a}, be a derived scene graph under $\imap_e$. When considering the preimage of $G'_1$, \ie{} $\imap_e^{-1}(G'_1) \subseteq \G_e$, we can determine, for example, that both $G_1$ from Fig. \ref{fig:example_1_b} and $\tilde{G}_1$ from Fig. \ref{fig:example_3_b} are elements of $\imap_e^{-1}(G'_1)$ since $\imap_e(G_1) = \imap_e(\tilde{G}_1) = G'_1$. On the other hand, none of the graphs from Example~\ref{ex:2_actions} is in $\imap_e^{-1}(G'_1)$, due to $a$ being in $p_1$ in $G_2$, $G_3$, or $m$ in $p_6$ in $G_4$, $G_5$.
\end{example}

The action set $U(\G_\subder)$ is the set of actions according to Definition~\ref{def:action_set}, meaning that they are parametrized by the nodes of $V(\G_\subder)$. Because $\imap$ modifies the nodes of the scene graphs in $\G$, $U(\G_\subder) \not=U(\G)$.
To illustrate this issue, suppose under $\imap$, an object node $v_{o}$ was pruned. Then, $\pick(v_{o})$, $\placeon(v_{o}, \cdot)$, $\placeon(\cdot, v_{o})\notin U(\G_\subder)$. Similarly, $\moveto(v_{p}) \notin U(\G_\subder)$ if $v_{p}\notin V(\G_\subder)$ due to merging. Every derived action is interpreted as a motion primitive (Sec.~\ref{sec:motion_prim}) over the maximal system. Whereas a motion primitive $\moveto(\cdot)$ may correspond to a sequence in $U(\G)$ of length greater than one, retained $\pick(\cdot)$ and $\placeon(\cdot)$ actions correspond to sequences of length one.

For a maximal scene graph $G_1$, let ${p_i}$ denote the place node containing the robot in $G_i$, \ie{}, $({p_i},v_a)_\inc \in E(G_i)$. Consider a motion primitive $\moveto(\omega)$, in which $\omega \in V_p/\R_p$ and $V_p$ is the set of place nodes in the maximal scene graph. Its induced trace $(G_1, u_1, \dots, u_{N-1},G_N)$ in the maximal system satisfies that every $u_i$ is a move action over $\G$ and that $p_N \in \omega$. Furthermore, there exists an index $m \in \{0,\dots,N\}$ such that $p_i \in [p_1]_{\R_p}$ for all $i<m$ and $p_i \in \omega$ for all $i\geq m$. Therefore, the robot remains inside the initial equivalence class until it enters the target class, and once it enters the target class it does not leave it.

\begin{figure*}
    \centering
    \begin{subfigure}{0.21\textwidth}
    \centering
    \begin{tikzpicture}
      [level distance=10mm,
       every node/.style={fill=yellow!80,circle,draw,minimum size=6mm,scale=0.8},
       level 1/.style={sibling distance=22mm,nodes={fill=yellow!60}},
       level 2/.style={sibling distance=14mm,nodes={fill=yellow!40}},
       level 3/.style={sibling distance=7mm,nodes={fill=yellow!20}},
       dot/.style={circle, fill, minimum size=6pt, inner sep=0pt, fill=black!70, draw=none},
       edge from parent/.style={->,draw}]
      \node {$e$}
         child {node (r123) {$r_{123}$}
            child {node (p123456) {$p_{123456}$}
                child {node (m) {$m$}}
                child {node (t) {$t$}}
                child {node[fill=green!25] (a) {$a$}}  
            }
         };
      \draw[double,->] (t) -- (m);
    \end{tikzpicture}
    \caption{$\imap_1(G_1)$}\label{fig:example_4_a}
    \end{subfigure}
    \begin{subfigure}{0.21\textwidth}
    \centering
    \begin{tikzpicture}
      [level distance=10mm,
       every node/.style={fill=yellow!80,circle,draw,minimum size=6mm,scale=0.8},
       level 1/.style={sibling distance=22mm,nodes={fill=yellow!60}},
       level 2/.style={sibling distance=14mm,nodes={fill=yellow!40}},
       level 3/.style={sibling distance=7mm,nodes={fill=yellow!20}},
       dot/.style={circle, fill, minimum size=6pt, inner sep=0pt, fill=black!70, draw=none},
       edge from parent/.style={->,draw}]
      \node {$e$}
         child {node (r123) {$r_{123}$}
            child {node (p123456) {$p_{123456}$}
                child {node (m) {$m$}}
                child {node (t) {$t$}}
                child {node[fill=green!25] (a) {$a$}}  
                child {node (d) {$d$}}
            }
         };
      \draw[double,->] (t) -- (m);
    \end{tikzpicture}
    \caption{$\imap_2(G_1)$}\label{fig:example_4_b}
    \end{subfigure}
    \begin{subfigure}{0.26\textwidth}
    \centering
    \begin{tikzpicture}
      [level distance=10mm,
       every node/.style={fill=yellow!80,circle,draw,minimum size=6mm,scale=0.8},
       level 1/.style={sibling distance=22mm,nodes={fill=yellow!60}},
       level 2/.style={sibling distance=13mm,nodes={fill=yellow!40}},
       level 3/.style={sibling distance=7mm,nodes={fill=yellow!20}},
       dot/.style={circle, fill, minimum size=6pt, inner sep=0pt, fill=black!70, draw=none},
       edge from parent/.style={->,draw}]
      \node {$e$}
         child {node (r123) {$r_{123}$}
            child {node (p1) {$p_{1}$}
                child {node (m) {$m$}}
                child {node (t) {$t$}}
            }
            child {node (p2345) {$p_{2345}$}}
            child {node (p6) {$p_{6}$}
                child {node[fill=green!25] (a) {$a$}}  
                child {node (d) {$d$}}
            }
         };
      \draw[double,->] (t) -- (m);
      \draw[dashed] (p1) -- (p2345);
      \draw[dashed] (p6) -- (p2345);
    \end{tikzpicture}
    \caption{$\imap_3(G_1)$}\label{fig:example_4_c}
    \end{subfigure}
    \begin{subfigure}{0.29\textwidth}
    \centering
    \begin{tikzpicture}
      [level distance=8mm,
       every node/.style={fill=yellow!80,circle,draw,minimum size=6mm,scale=0.8},
       level 1/.style={sibling distance=22mm,nodes={fill=yellow!60}},
       level 2/.style={sibling distance=10mm,nodes={fill=yellow!40}},
       level 3/.style={sibling distance=8mm,nodes={fill=yellow!20}},
       dot/.style={circle, fill, minimum size=6pt, inner sep=0pt, fill=black!70, draw=none},
       edge from parent/.style={->,draw}]
      \node {$e$}
         child {node (r123) {$r_{123}$}
            child {node (p1) {$p_1$}
                child {node (m) {$m$}}
                child {node (t) {$t$}}
            }
            child {node (p235) {$p_{235}$}}
            child {node[fill=red!20] (p4) {$p_4$}}
            child {node (p6) {$p_6$}
                child {node[fill=green!25] (a) {$a$}}  
                child {node (d) {$d$}}
            }
         };
      \draw[double,->] (t) -- (m);
      \draw[dashed] (p1) -- (p235);
      \draw[dashed] (p4) -- (p235);
      \draw[dashed] (p235) .. controls +(down:5mm) and +(down:5mm) .. (p6);
    \end{tikzpicture}
    \caption{$\imap_4(G_1)$}\label{fig:example_4_d}
    \end{subfigure}
    \caption{The sequence of derived scene graphs presented in Example \ref{ex:4_task}. \captionedges{}}
    \label{fig:example_4}
    \figvspace{}
\end{figure*}

\begin{definition}[Derived scene graph transition system] Given a scene graph transition system $(\G,U(\G),f)$, and a mapping $\imap:\G \rightarrow \G_\subder$, let $U(\G_\subder)$ be the set of actions over $\G_\subder$ according to Definition~\ref{def:action_set}. Then, a scene graph transition system derived from $(\G,U(\G),f)$ by $\imap$ is the quotient system $(\G_\subder, U(\G_\subder), \F_\subder):=(\G,U(\G_\subder),\diamond_f)/\imap$.
\end{definition}

Note that $(\G_\subder, U(\G_\subder), \F_\subder)$ is deterministic if $\imap$ is sufficient with respect to $\diamond_f$, \ie{} it satisfies \eqref{eq:sufficiency}.

\subsection{Scene graphs sufficient for planning}

In Sec.~\ref{sec:theory}, we formally defined a model sufficient for planning for a task. Here, we apply this notion to the case when models are derived scene graph transition systems.

Let $\X=(\G, U(\G), f)$ be the maximal scene graph transition system and $T \subseteq \Omega_\X$ be a task over its traces.
Given an I-map $\imap: \G \rightarrow \G_\subder$, let $\X_{U(\G_\subder)}:=(\G,U(\G_\subder),\diamond_f)$ be the restriction of $\X$ by the set $U(\G_\subder)$ of motion primitives. Consecutively, let $\X_\subder$ be the derived scene graph transition system, that is, $\X_\subder:= \X_{U(\G_\subder)}/\imap$.

Following Sec.~\ref{sec:motion_prim}, the motion-primitive condenser is $c_\MP: \bar{\Omega}_{\X_U(\G_\subder)} \rightarrow {\Omega}_{\X_U(\G_\subder)}$. The I-map $\imap$ induces the condenser $c_\imap: {\Omega}_{\X_U(\G_\subder)} \rightarrow \Omega_{\X_\subder}$ from the traces corresponding to motion-primitives to derived scene graph traces as given in \eqref{eq:cImap}. Then, the complete condenser is $c=c_\imap \circ c_\MP$.

\begin{proposition}\label{prop:SG_suff}
    Given a maximal scene graph transition system $\X=(\G, U(\G), f)$, let $\X_{U(\G_\subder)}=(\G,U(\G_\subder),\diamond_f)$ be its restriction by motion primitives and let $\X_\subder= \X_{U(\G_\subder)}/\imap$ be the derived scene graph transition system by $\imap$. Let $T \subseteq \Omega_\X$ be a task and define $T_{U(\G_\subder)}:=T \cap \bar{\Omega}_{\X_U(\G_\subder)}$. Suppose $\imap$ is sufficient with respect to $\diamond_f$ and $T_{U(\G_\subder)}$ is well-posed under $c=c_\imap\circ c_\MP$. Then, every feasible plan in $\X_\subder$, initialized at $\imap(G_1)$ for $c(T_{U(\G_\subder)})$ is feasible in $\X$ for $T$ when executed from $G_1 \in \G$.
\end{proposition}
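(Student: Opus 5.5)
The plan is to reduce the statement to Proposition~\ref{prop:poseable_feasible}, applied not to $\X$ directly but to the restricted system $\X_{U(\G_\subder)}=(\G,U(\G_\subder),\diamond_f)$. We then lift the resulting feasibility back to $\X$ by unfolding each motion primitive through $c_\MP$.

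Let $\hissigma=(\sigma_1,\dots,\sigma_N)\in U(\G_\subder)^N$ be a feasible plan in $\X_\subder$ from $\imap(G_1)$ for $c(T_{U(\G_\subder)})$. Its derived trace $\omega_\subder=(\imap(G_1),\sigma_1,\dots,\sigma_N,\cdot)$ therefore lies in $c(T_{U(\G_\subder)})$.

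\textbf{Step 1 (trace identification).} Execute $\hissigma$ in $\X_{U(\G_\subder)}$ from $G_1$. This gives $\omega_{\hissigma}=(G_1,\sigma_1,G_2,\dots,\sigma_N,G_{N+1})$ with $G_{i+1}=\diamond_f(G_i,\sigma_i)$. Since $\imap$ is sufficient with respect to $\diamond_f$, the quotient $\X_\subder$ is deterministic. Induction on $i$, exactly as in the proof of Proposition~\ref{prop:poseable_feasible}, then gives $c_\imap(\omega_{\hissigma})=\omega_\subder$. Moreover, $\omega_{\hissigma}=c_\MP(\bar\omega_{\hissigma})$, where $\bar\omega_{\hissigma}=\fTr(G_1,\sigma_1)\cat\cdots\cat\fTr(G_N,\sigma_N)\in\bar\Omega_{\X_{U(\G_\subder)}}$ is the concatenated trace of $\X$ obtained by executing the unfolded action sequence in $U(\G)$. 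Hence $c(\bar\omega_{\hissigma})=\omega_\subder\in c(T_{U(\G_\subder)})$.

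\textbf{Step 2 (well-posedness).} By the hypothesis $c\preimg(c(T_{U(\G_\subder)}))=T_{U(\G_\subder)}$, we get $\bar\omega_{\hissigma}\in T_{U(\G_\subder)}\subseteq T$. Equivalently, Lemma~\ref{lem:refinement} applied to the composite condenser $c$ says that $c$ refines task membership on $\bar\Omega_{\X_{U(\G_\subder)}}$. Because $\bar\omega_{\hissigma}$ is precisely the trace of $\X$ generated from $G_1$ by the concatenated action sequence $\sigma_1(G_1)\cat\cdots\cat\sigma_N(G_N)$, that sequence is a feasible plan for $T$ in the sense of the feasibility definition. This concatenated sequence is what ``executing $\hissigma$ in $\X$'' means.

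\textbf{Main obstacle.} The delicate point is Step~1. The claim $c(\bar\omega_{\hissigma})=\omega_\subder$ requires that the derived trace be determined from $\imap(G_1)$ alone, independently of which preimage $G_1$ is chosen. It also requires $c_\MP$ to be well defined, since a trace of $\X$ might arise from two different primitive sequences. The first issue is handled by sufficiency of $\imap$ with respect to $\diamond_f$, which is stated for the primitive-level transition $\diamond_f$ rather than for $f$. The second is handled by the paper's convention that $\bar\omega_{\hissigma}$ carries its generating decomposition, so $c_\MP$ is defined on these indexed traces. I would state both points explicitly, and then note that the domain of $c$ is $\bar\Omega_{\X_{U(\G_\subder)}}$, so that well-posedness is applied within that set.
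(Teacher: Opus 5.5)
Your proposal is correct and takes essentially the paper's route. You work in the restricted system $\X_{U(\G_\subder)}$, use sufficiency of $\imap$ with respect to $\diamond_f$ to identify the condensed trace with the derived trace, and then apply well-posedness under $c$ together with $T_{U(\G_\subder)}\subseteq T$. The paper differs only in that it first transfers well-posedness to $c_\MP(T_{U(\G_\subder)})$ under $c_\imap$ and cites Proposition~\ref{prop:poseable_feasible} as a black box; you instead inline the trace-identification step and apply well-posedness under the composite $c$ directly, which is slightly more economical and skips that transfer.
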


\begin{proof}
    By construction, every trace of $\X_{U(\G_\subder)}$ corresponds to a trace of $\X$ under $c_\MP$. Hence, well-posedness of $T_{U(\G_\subder)}$ under $c$ implies that $c_\MP(T_{U(\G_\subder)})$ is well-posed under $c_\imap$. Then we can apply Proposition~\ref{prop:poseable_feasible} to $\X_{U(\G_\subder)}$ with task $c_\MP(T_{U(\G_\subder)})$. Every feasible plan in $\X_\subder$ induces a trace in $\X_{U(\G_\subder)}$ belonging to $c_\MP(T_{U(\G_\subder)})$ whose corresponding trace in $\X$ belongs to $T_{U(\G_\subder)}$ by well-posedness under $c$. Since $T_{U(\G_\subder)} \subseteq T$, the plan corresponding to the sequence of actions in $U(\G)$ encoded by the motion primitive sequence is feasible for $T$.
\end{proof}

\begin{corollary}Let the assumptions of Proposition~\ref{prop:SG_suff} hold. If the set $U(\G_\subder)$ is complete for $T$ from the admissible initial states $\G_0 \in \G$, then for any $G_1 \in \G_0$, existence of a feasible plan for $T$ in $\X$ from $G_1$ implies existence of a feasible plan for $c(T_{U(\G_\subder)})$ in $\X_\subder$ from $\imap(G_1)$.
\end{corollary}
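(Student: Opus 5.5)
The plan is to chain the definition of completeness of $U(\G_\subder)$ with the construction of the condensers, so that a feasible primitive-level trace in $\X$ is mapped to a derived trace lying in $c(T_{U(\G_\subder)})$ by construction. We fix $G_1 \in \G_0$ and assume there is a feasible plan for $T$ in $\X$ from $G_1$. Then $T_{G_1} \neq \emptyset$.

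\emph{Step 1: pass to motion primitives.} Because $U(\G_\subder)$ is complete for $T$ (Sec.~\ref{sec:motion_prim}), $(T_{U(\G_\subder)})_{G_1} \neq \emptyset$. So there is a finite primitive sequence $\hissigma = (\sigma_1,\dots,\sigma_N)$ with each $\sigma_i \in U(\G_\subder)$, whose concatenated external trace $\bar\omega_{\hissigma} \in \bar\Omega_{\X_{U(\G_\subder)}}$ starts at $G_1$ and belongs to $T$. Hence $\bar\omega_{\hissigma} \in T_{U(\G_\subder)}$.

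\emph{Step 2: condense.} Apply $c_\MP$ to obtain the trace $\omega_{\hissigma} = (G_1,\sigma_1,\diamond_f(G_1,\sigma_1),\dots)$ of $\X_{U(\G_\subder)}$. Then apply $c_\imap$ to get $c(\bar\omega_{\hissigma}) = (\imap(G_1),\sigma_1,\imap(G_2),\dots,\sigma_N,\imap(G_{N+1}))$. By definition of the image, this lies in $c(T_{U(\G_\subder)})$.

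\emph{Step 3: show it is a trace of $\X_\subder$ generated by $\hissigma$.} This is the only step that needs care. We must check that $c(\bar\omega_{\hissigma})$ is a genuine trace of the quotient system $\X_\subder = \X_{U(\G_\subder)}/\imap$, and that it is the trace the planner associates with $\hissigma$ from $\imap(G_1)$.

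By the definition of the quotient, $([G_i]_\imap,\sigma_i,[G_{i+1}]_\imap) \in T/\imap$ whenever $G_{i+1}=\diamond_f(G_i,\sigma_i)$, so the sequence respects the quotient transition relation. Sufficiency of $\imap$ with respect to $\diamond_f$, assumed in Proposition~\ref{prop:SG_suff}, makes $\F_\subder$ a function. Therefore the trace generated by $\hissigma$ from $\imap(G_1)$ in $\X_\subder$ is unique and equals $c(\bar\omega_{\hissigma})$.

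It follows that $\hissigma$, viewed as an action sequence in $U(\G_\subder)$, is a feasible plan for $c(T_{U(\G_\subder)})$ in $\X_\subder$ from $\imap(G_1)$. Well-posedness is not needed for this direction; it only guarantees, via Proposition~\ref{prop:SG_suff}, that the converse also holds.
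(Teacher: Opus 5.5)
Your proof is correct and follows the same route as the paper. Completeness yields a motion-primitive-generated trace from $G_1$ lying in $T_{U(\G_\subder)}$, and the condenser $c=c_\imap\circ c_\MP$ maps it into $c(T_{U(\G_\subder)})$; your Step 3 uses sufficiency of $\imap$ with respect to $\diamond_f$ to identify this image with the unique trace generated by $\hissigma$ from $\imap(G_1)$, which makes explicit a check the paper leaves implicit, and you are also right that well-posedness is not needed for this direction.
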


\begin{proof}
    If a feasible trace for $T$ exists from $G_1$, completeness of motion primitives implies that there exists a motion primitive generated trace from $G_1$ belonging to $T_{U(\G_\subder)}$. Then, $c$ maps this trace to a trace in $\X_\subder$ belonging to $c(T_{U(\G_\subder)})$, proving that a feasible plan exists in $\X_\subder$.
\end{proof}

The following example illustrates the notions of sufficient scene graphs and when the required conditions are violated.

\begin{example}[Sufficient scene graph for a task]\label{ex:4_task} Consider again the environment $e$ from Example \ref{ex:1_scene}, represented by $\X=(\G_e, U(\G_e),f)$. Let the task $T_e \subseteq \Omega_\X$ be described informally as ``Bringing a mug to the desk while avoiding $p_4$''. $T_e$ is the set of all finite scene graph traces where the robot $a$ is never in $p_4$, and which end with a graph where the mug $m$ is onto desk $d$; formally:
    \begin{align*}
        T_e = \{ (G_1, u_1, G_2, \dots, G_{N}) \in \Omega_\X \mid (d,m)_{\con} \in E(G_{N}) \\
        \wedge (\not\exists G \in (G_1,\dots,G_N)~(p_4,a)_\inc \in E(G))\}_{N=0}^{<\Nat}
    \end{align*}
    Suppose $G_1$ from Example \ref{ex:1_scene} is the unique initial state for $\X$.
    We now consider the derived scene graphs in Fig.~\ref{fig:example_4} and discuss which information must be preserved for the task.

    Let $\imap_1$ be the I-map whose derived graph $\imap_1(G_1)$ is shown in  Fig.~\ref{fig:example_4_a}. Since the node $d$ is pruned, the derived graph cannot distinguish traces that end with $(d,m)_{\con} \in E(G_{N})$ from traces that do not. Therefore, the task is not well-posed under $c_1 = c_{\imap_1}\circ c_\MP$.

    Let $\imap_2$ be the I-map whose derived graph $\imap_2(G_1)$ is shown in  Fig.~\ref{fig:example_4_b}. This I-map preserves the node $d$ distinguishing the traces that satisfy the goal condition from the ones that do not. However, since all places are merged into a single node in the derived graph, the outcomes of $\pick(m)$ and $\placeon(m,d)$ actions cannot be determined in the derived system. Hence $\imap_2$ is not sufficient.

    Let $\imap_3$ be the I-map whose derived graph $\imap_3(G_1)$ is shown in  Fig.~\ref{fig:example_4_c}. This mapping distinguishes $p_1$, and $p_6$. Therefore, the relevant $\pick(m)$ and $\placeon(m,d)$ outcomes are uniquely determined. However, the places are merged into a single derived place node $p_{2345}$. Consequently, a derived trace using $\moveto(p_{2345})$ cannot determine whether the corresponding traces in the maximal system pass through $p_4$. The task is not well-posed under $c_3=c_{\imap_3}\circ c_\MP$.

    Let $\imap_4$ be the I-map whose derived graph $\imap_4(G_1)$ is shown in  Fig.~\ref{fig:example_4_d}. This mapping distinguishes $p_1$, $p_4$, and $p_6$, while merging the rest of the place nodes. The derived graph retains the information needed to determine the outcomes of $\pick(m)$ and $\placeon(m,d)$, and it also preserves the information needed to check whether the robot visits $p_4$. Thus, on the reachable part of the transition system considered in this example, $\imap_4$ satisfies the sufficiency and well-posedness conditions for $T_e$. The derived plan
    \begin{align*}
        \pi_4 = ( & \moveto(p_{235}),\moveto(p_1),\pick(m),      \\
                  & \moveto(p_{235}),\moveto(p_6),\placeon(m,d))
    \end{align*}
    corresponds to a feasible plan for $T_e$ in the maximal scene graph transition system.
\end{example}


\section{Discussion and Conclusions}
We introduced a formal notion of sufficient scene graphs for robotic task planning. The proposed formalism reframes scene graph compression as a task-dependent information-abstraction problem. Rather than asking whether a derived scene graph preserves all properties of the maximal graph, the relevant question is whether it preserves the information needed to find plans that remain feasible in the original system. In this view, scene graphs are states of a transition system over which pruning and merging operations define an I-map, and derived scene graphs are quotient models induced by that mapping.

The resulting conditions separate distinct sources that allow checking feasibility of plans obtained over derived scene graphs. Deterministic quotient behavior ensures that derived actions have well-defined effects, while well-posedness ensures that task satisfaction is determined by derived traces. Together, these conditions guarantee that plans obtained over derived systems lift to feasible plans over the maximal system. Preservation of plan existence additionally requires completeness of the motion primitives induced by the derived action set.

This perspective is complementary to existing scene graph planning methods. Hierarchical, heuristic, and LLM-guided compression methods can be seen as implicitly choosing an I-map, and the framework presented here provides criteria for analyzing such choices. The characterization of planning sufficiency also suggests computational directions. Given a candidate merge and prune operations, sufficiency can be checked by searching counterexamples (similar to~\cite{seipp2018counterexample}), that is, searching for traces in the maximal system that map to the same derived trace but have different task labels. Furthermore, it also connects to partition-refinement methods for computing quotient systems since the induced I-map labeling is required to be a refinement of task labeling.

The present formulation is limited to deterministic scene graph transition systems, open-loop plans, and a restricted set of relations. Extending the framework to uncertainty, richer semantic relations, and developing algorithms for constructing (near) minimal sufficient derived scene graphs or their approximations remain important directions for future work.


\section*{Acknowledgments}
OpenAI's GPT5.5 was used to improve text readability.


\bibliographystyle{IEEEtran}
\bibliography{IEEEabrv,bibliography}
\balance
\end{document}